\newcommand{\conv}{\mathrm{conv}}
\newcommand{\idist}{\beta}
\newcommand{\bnd}{\mathbf{\tau}}
\newcommand{\const}{\mathbf{c}}
\newcommand{\Real}{\mathbb{R}}
\newcommand{\cmdpM}{\mathcal{M}_c}
\newcommand{\Prob}{\mathcal{P}}
\newcommand{\prim}{V}
\newcommand{\X}{\mathcal{S}}
\newcommand{\A}{\mathcal{A}}
\newtheorem{theorem}{Theorem}
\newtheorem{proposition}{Proposition}
\newtheorem{definition}{Definition}
\newtheorem{problem}{Problem}
\newtheorem{remark}{Remark}
\newcommand{\rev}[1]{#1}
\title{\LARGE \bf
Initial Distribution Sensitivity of Constrained Markov Decision Processes}
\author{Alperen Tercan and Necmiye Ozay
\thanks{This work was supported in part by ONR CLEVR-AI MURI (\#N00014-21-1-2431). The authors are with the Electrical Engiinering and Computer Science Department at the University of Michigan, Ann Arbor. Emails: $\{$tercan,necmiye$\}$@umich.edu
}
}
\begin{document}

\maketitle
\thispagestyle{empty}
\pagestyle{empty}

\begin{abstract}%
 Constrained Markov Decision Processes (CMDPs) are notably more complex to solve than standard MDPs due to the absence of universally optimal policies across all initial state distributions. This necessitates re-solving the CMDP whenever the initial distribution changes. In this work, we analyze how the optimal value of CMDPs varies with different initial distributions, deriving bounds on these variations using duality analysis of CMDPs and perturbation analysis in linear programming. Moreover, we show how such bounds can be used to analyze the regret of a given policy due to unknown variations of the initial distribution.

\end{abstract}


\section{Introduction}

\looseness-1 Constrained Markov Decision Processes (CMDPs) are a fundamental framework for modeling sequential decision-making problems where an agent seeks to maximize cumulative rewards while satisfying certain constraints. These processes extend the classical Markov Decision Processes (MDPs) \cite{puterman2014markov} by incorporating constraints on expected costs or resources, making them highly relevant in fields such as operations research, robotics, telecommunications, and finance \cite{altman1999constrained}. In many real-world applications, agents must not only optimize performance but also adhere to limitations like energy consumption, risk levels, or time budgets \cite{feinberg2000constrained}.

A significant challenge in CMDPs is the lack of uniformly optimal policies across all possible initial state distributions. Unlike standard MDPs, where a stationary policy is optimal regardless of the starting state due to the Markov property and the structure of the optimization problem \cite{bertsekas1995dynamic}, the presence of constraints means that a policy optimal for one initial distribution may not be optimal for another. This issue becomes particularly problematic in environments where the initial state distribution is uncertain or varies over time, leading to potential suboptimal performance if the policy is not adapted accordingly \cite{chow2017risk}.

Traditional approaches often involve solving the CMDP for a specific nominal initial distribution, yielding a policy tailored to that scenario \cite{altman1999constrained}. However, when the actual initial distribution deviates from the nominal one, the performance of this policy can degrade, and re-solving the CMDP at deployment time for the new distribution can be computationally intensive \cite{feyzabadi2017planning}. This is especially impractical in dynamic environments where quick decision-making is crucial. \rev{While the continuity and stability of CMDPs under these deviations have been studied \cite{altman1991sensitivity,altman2002stability}, there is currently a lack of numerical methods for quantifying or analyzing the resulting performance impact.}

\looseness-1 One could instead consider searching for robust policies for uncertain CMDPs, where the uncertainty can be in the initial condition or transition probabilities. However, as opposed to MDPs, robust CMDPs are non-trivial to solve, leading to bilinear optimization problems \cite{varagapriya2024constrained} or heuristic algorithms with local convergence guarantees \cite{zhang2024distributionally, russel2020robust, wang2022robust}. We note that when uncertainty is limited to the initial distribution, an optimal policy robust for all possible initial distributions can be found within the class of non-Markovian policies—specifically, by conditioning the policy on the initial state. However, this requires solving a separate problem for each initial state, which becomes intractable in large state spaces.


In this paper, we address this problem by presenting three different methods to derive upper bounds on the expected value function under any arbitrary initial distribution, using the solution obtained from a nominal distribution. Our approach can enable practitioners to estimate the potential performance loss without the need to recompute the optimal policy for each new initial distribution. These upper bounds act as a diagnostic tool to determine whether the existing policy remains sufficiently effective or if re-optimization is warranted under the new conditions. We further show how these bounds can be used to compute sets of initial conditions for which the nominal policy provides a robust solution.


\section{Preliminaries}\label{sec:preliminaries}

In this section, we introduce the necessary background for our analysis.

\subsection{Constrained Markov Decision Process (CMDP)}

A CMDP is defined by the tuple \( \cmdpM = (\X, \A, \Prob, r, \const, \bnd, \gamma, \idist) \). Here, \( \X \) denotes a finite set of states, and \( \A \) represents a finite set of actions. The transition probability function \( \Prob: \X \times \A \to \Delta_{\X} \)\footnote{$\Delta_{\X}$ is the probability simplex over $\X$.} gives the probability \( \Prob(s' | s, a) \) of reaching state \( s' \) from state \( s \) after action \( a \). The reward function \( r: \X \times \A \to \mathbb{R} \) defines the immediate reward \( r(s, a) \) received upon taking action \( a \) in state \( s \). The vector-valued constraint function \( \const: \X \times \A \to \mathbb{R}^K \) defines utilities associated with state-action pairs for each of constraints $i=1,\dots,K$. Similarly, \( \bnd \in \mathbb{R}^K \) is the constraint thresholds. The discount factor \( \gamma \in [0, 1) \) adjusts the weight of future rewards, while \( \idist: \in \Delta_{\X} \) is the initial distribution over states, with \( \idist(s) \) as the probability of starting in state \( s \). 

For a given policy \( \pi \) and an initial distribution \( \beta \), the expected discounted cumulative sum for any function \( f: \X \times \A \to \mathbb{R} \) is defined as:
\[
V_f^\pi(\beta) = \underset{\substack{A_t \sim \pi \\ S_0 \sim \beta}}{\mathbb{E}} \left[ \sum_{t=0}^\infty \gamma^t f(S_t, A_t) \right].
\]
This function is linear in $\beta$, since it can be written as $\beta^\top V_f^\pi$ where $i^\text{th}$ entry of $V_f^\pi \in \Real^{|\X|}$ defined as:
\begin{equation}\label{eq:value_vector}
(V_f^\pi)_i = \underset{\substack{A_t \sim \pi}}{\mathbb{E}} \left[ \sum_{t=0}^\infty \gamma^t f(S_t, A_t) \mid S_0 = s_i\right].
\end{equation}
The goal in a CMDP is to find a policy \( \pi: \X \to \A \) that maximizes the expected discounted cumulative reward while satisfying constraints on the cumulative utility. Formally, the optimization problem is:

\begin{equation}\label{eq:cmdp-problem-def}
    \begin{aligned}
    \max_{\pi} & \;\; V_r^\pi(\beta) \\
    \text{subject to} & \;\; V_{c_i}^\pi(\beta) \geq \bnd_i, \quad i \in [K],
    \end{aligned}
\end{equation}
where $[K]$ denotes the set $\{1,\dots,K\}$. 
The optimal value of this problem, as a function of the initial distribution \( \beta \), and constraint threshold vector $\tau$, is denoted by \( \prim^*(\beta, \tau) \). When $\tau$ is clear from the context, we simply write $V^*(\beta)$.

For a given policy $\pi$, it is useful to have a measure of how much worse it is compared to the optimal policy. \textit{Regret} has been used as the difference between the value of a policy and the optimal policy. In a CMDP setting, we define a generalized notion of regret that also allows quantifying regret for infeasible policies.   

\begin{definition}\label{def:eps-regret}
A policy $\pi$ for the CMDP $(\X, \A, \Prob, r, \const, \bnd, \gamma, \idist)$ is said to incur $(\delta, \epsilon)$-regret if it satisfies the following conditions:
\begin{align*}
    &\tau_i - V_{c_i}^\pi(\beta) \leq \delta_i \qquad \forall i \in [K], \tag{$\delta$-Violation} \label{eq:delta-violation}\\
    &\prim^*(\beta, \tau-\delta) - V_r^\pi(\beta) \leq \epsilon.\tag{$\epsilon$-Regret} \label{eq:epsilon-regret}
\end{align*}
\end{definition}
\vspace{2mm}

We define the minimal regret of a policy $\pi$ as the pair $(\delta^*, \epsilon^*)$ satisfying $(\delta^*, \epsilon^*) \le (\delta,\epsilon)$ for every $(\delta,\epsilon)$ such that $\pi$ incurs $(\delta,\epsilon)$-regret. Here, $(\delta,\epsilon) \le (\delta',\epsilon')$ means that $\delta_i \le \delta'_i$ for all $i$ and $\epsilon \le \epsilon'$. Although this partial order is not total, a minimum always exists: each component of $\delta$ can be minimized independently under \eqref{eq:delta-violation}, and the minimal $\epsilon$ satisfying \eqref{eq:epsilon-regret} is non-decreasing in each component of $\delta$. This follows from the fact that $V^*(\beta,\tau-\delta)$ depends on $\delta$ in a non-decreasing manner—relaxing any constraint does not reduce the optimal value. Moreover, $(\delta^*, \epsilon^*)$ is the unique solution to the conditions in Definition~\ref{def:eps-regret} when the inequalities are replaced with equalities.

This definition allows $\pi$ to violate the constraints up to $\delta$, but effectively ``penalizes" such violations by using $V^*(\beta,\tau-\delta)$ instead of $V^*(\beta, \tau)$ in Equation~\eqref{eq:epsilon-regret} as higher $\delta$ values increase the overall regret. In this paper, we focus on the set of initial distributions that are feasible under $\pi$; hence, we consider only $(0, \epsilon)$-regret policies for ease of exposition. However, we will remark on the extensions for the non-zero $\delta$ case whenever appropriate.

\subsection{Linear Program (LP) formulation of CMDPs}\label{sec:cmdp-lp}

In addition to the policy-space formulation in Equation~\eqref{eq:cmdp-problem-def}, a CMDP can be expressed as an LP in terms of occupation-measures $\rho \in \Real^{|\X||\A|}$ as follows:

\begin{equation}\label{eq:lp-primal}
\begin{aligned}
\max_{\rho \geq 0}& \quad  r^\top \rho \\
\text{ s.t. }& \quad C\rho \geq \tau, \\ 
&\quad \Psi\rho = \beta ;
\end{aligned}\tag{P}
\end{equation}
where:
\begin{align*}
      C &= \left[\begin{array}{ c | c | c | c }
    c^{a_1} & c^{a_2} & \cdots & c^{a_{|\A|}}\\
  \end{array}\right],\\
      \Psi &= \left[\begin{array}{ c | c | c | c }
    I - \gamma T^{a_1} & I - \gamma T^{a_2} & \cdots & I - \gamma T^{a_{|\A|}}\\
  \end{array}\right],
\end{align*}
with $(c^{a_k})_{i,j} = c_i(s_j, a_k)$ and $T^{a_k}_{ij} = \Prob(s_i | s_j, a_k)$.

Once an optimal occupation measure $\rho^*$ for Problem~\eqref{eq:lp-primal} is computed, a policy $\pi$ inducing $\rho^*$ can be obtained by setting $\pi(a|s) = \rho^*(s,a)/\sum_{a' \in \A} \rho^*(s,a')$. For states with $\sum_{a' \in \A} \rho^*(s,a') = 0$, $\pi$ can be chosen arbitrarily as such states would be unreachable. 

We can also write the dual LP corresponding to Problem~\eqref{eq:lp-primal} as:
\begin{equation}\label{eq:lp-dual}
\begin{aligned}
\min_{W, \lambda \geq 0}& \quad  \beta^\top W - \lambda^\top \tau \\
\text{ s.t. }& \quad \Psi^\top W  \geq r + \lambda^\top C; \;
\end{aligned}\tag{D}
\end{equation}
where $W \in \Real^{|\X|}$ and $\lambda \in \Real^K$. We denote an optimal solution corresponding to the problem instance with $\beta$ as $W^*(\beta)$ and $\lambda^*(\beta)$.

\section{Obtaining Bounds for $\prim^*(\beta_1)$}\label{sec:bounds}

In this section, we describe three approaches for obtaining bounds on $\prim^*(\beta_1)$ without explicitly solving an optimization problem for the initial distribution $\beta_1$.

\subsection{LP Dual Feasibility Bound}\label{sec:bnd-dual}

Our first bound uses the fact that the solution of Problem~\eqref{eq:lp-dual}
at a given nominal initial distribution can be used to bound the optimal performance at different distributions. 

\begin{theorem}\label{thm:lagrangian-bound}
Let $\beta_0$ be the nominal initial distribution the CMDP is solved for; with an optimal dual solution $\lambda^*({\beta_0})$ and $W^*(\beta_0)$. For any initial distribution $\beta_1$, the value $\prim^*(\beta_1)$ satisfies:
\begin{equation}\label{eq:lagrangian-bound}
     \prim^*(\beta_1) \leq \beta_1^\top W^*(\beta_0) - \tau^\top \lambda^*(\beta_0).
\end{equation}
\end{theorem}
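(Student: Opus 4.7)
The plan is to invoke LP weak duality, exploiting a structural feature of the dual program \eqref{eq:lp-dual}: its feasible set, defined by $\Psi^\top W \geq r + \lambda^\top C$ and $\lambda \geq 0$, depends only on $r$, $C$, and $\Psi$, and is completely independent of the initial distribution $\beta$. The dependence on $\beta$ appears solely in the dual objective $\beta^\top W - \lambda^\top \tau$. Consequently, any dual-optimal point computed for one initial distribution is automatically dual-feasible at any other initial distribution.

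Concretely, I would first observe that, by virtue of being optimal for the dual LP at $\beta_0$, the pair $(W^*(\beta_0), \lambda^*(\beta_0))$ satisfies $\Psi^\top W^*(\beta_0) \geq r + \lambda^*(\beta_0)^\top C$ together with $\lambda^*(\beta_0) \geq 0$. Since these are precisely the constraints defining the feasible set of the dual LP at $\beta_1$, the same pair is feasible for that problem.

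The second step is to invoke weak LP duality: for any dual-feasible $(W, \lambda)$ of the dual LP at $\beta_1$, the corresponding dual objective upper bounds the primal optimum, i.e., $\prim^*(\beta_1) \leq \beta_1^\top W - \lambda^\top \tau$. Substituting the feasible point $(W^*(\beta_0), \lambda^*(\beta_0))$ yields the claimed inequality $\prim^*(\beta_1) \leq \beta_1^\top W^*(\beta_0) - \tau^\top \lambda^*(\beta_0)$.

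I do not foresee a real obstacle, since strong duality is not required — only weak duality — and the decoupling of the dual feasible set from $\beta$ is immediate from the form of $\Psi$ and $C$. The only minor point worth mentioning is the edge case where Problem~\eqref{eq:lp-primal} is infeasible at $\beta_1$; then $\prim^*(\beta_1) = -\infty$ and the bound holds trivially, so no separate argument is needed.
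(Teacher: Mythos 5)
Your proposal is correct and follows essentially the same route as the paper: observe that the dual feasible set of Problem~\eqref{eq:lp-dual} is independent of $\beta$, so the dual-optimal pair for $\beta_0$ remains dual-feasible for $\beta_1$, and its objective value bounds $\prim^*(\beta_1)$ from above. Your explicit appeal to weak duality (rather than the paper's identification of the dual optimum with $\prim^*(\beta_1)$) and your remark on the infeasible case are minor refinements, not a different argument.
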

\vspace{1em}
\begin{proof}

As the pair $\lambda^*({\beta_0})$ and $W^*(\beta_0)$ is an  optimal solution of Problem~\eqref{eq:lp-dual} for $\beta_0$, it is still feasible when the Problem~\eqref{eq:lp-dual} is solved for $\beta_1$. This is because $\beta$ only appears in the cost function of Problem~\eqref{eq:lp-dual} but does not impact its feasible set.
Hence, $\lambda^*({\beta_0})$ and $W^*(\beta_0)$ are feasible for any initial distribution $\beta_1$.
As Problem~\eqref{eq:lp-dual} is a minimization problem, the value of its objective function at any feasible solution would yield an upper bound on its optimal value, $\prim^*(\beta_1)$. Then, by plugging in $\lambda^*({\beta_0})$ and $W^*(\beta_0)$ into the objective function of Problem~\eqref{eq:lp-dual}, we obtain the desired result.

\end{proof}

\begin{remark}\label{rem:delta-duality}
To analyze $(\delta, \epsilon)$-regret policies, it is necessary to bound $V^*(\beta_1, \tau - \delta)$. The bound derived in this section extends naturally to bound $V^*(\beta_1, \tau - \delta)$. Let $\lambda^*(\beta_0)$ and $W^*(\beta_0)$ be the optimal solution of dual problem of  $\cmdpM = (\X, \A, \Prob, r, \const, \tau, \gamma, \beta_0)$. Then, for any initial distribution $\beta_1$ and violation $\delta$, the value $\prim^*(\beta_1, \tau - \delta)$ satisfies:
\begin{equation}\label{eq:lagrangian-bound-delta}
    \prim^*(\beta_1, \tau - \delta) \leq \beta_1^\top W^*(\beta_0) - (\tau - \delta)^\top \lambda^*(\beta_0).
\end{equation}
\end{remark}

\subsection{Bound from Perturbation of LP}

An alternative bound can be obtained by
treating different initial distributions as perturbations from the nominal initial distribution. Since this perturbation corresponds to a perturbation of the right-hand side vector of equality constraint in Problem~\eqref{eq:lp-primal}, we can obtain a bound based on perturbation analysis of LPs. 
In particular, we adapt the method in~\cite{nayakkankuppam1999conditioning}, summarized with the following theorem.

\begin{theorem}\label{thm:lp-perturbation} (from \cite{nayakkankuppam1999conditioning})
Consider Problem~\eqref{eq:lp-primal}. Let $\rho^*$ and $[W^*;\lambda^*]\in\Real^{|\X|+K}$ to be the optimal solutions of Problem~\eqref{eq:lp-primal} and \eqref{eq:lp-dual}, respectively.
Then, the change in optimal solutions, $\Delta \rho^* =\rho^*_{\text{new}} - \rho^*$ and $[\Delta W^*;\Delta \lambda^*]=[W^*_{\text{new}}-W^*; \lambda^*_{\text{new}} -\lambda^*]$, under the new initial distribution $\Delta \beta + \beta$ satisfies the following for any $\Delta \beta$:
\begin{equation}\label{eq:pert-solution-change}
    \sqrt{\|\Delta \rho^* \|^2 + \|[\Delta W^*;\Delta \lambda^*]\|^2}  \leq \|R^{-1}\| \|\Delta \beta\|,
\end{equation}
where $\|\cdot\|$ is the 2-norm for vectors and the spectral norm for matrices. $R$ is defined as:
$$
R =
\begin{bmatrix}
R_1 & 0 & 0 \\
0 & R_1^\top & 0 \\
0 & R_2^\top & I
\end{bmatrix},
$$
where $R_1, R_2$ constitute a partition of columns of the matrix $\begin{bmatrix}
    C & -I\\
    \Psi & 0
\end{bmatrix}$ such that $i^\text{th}$ column is in $R_1$ if $i \leq |\X||\A|$ and $\rho^*_i > 0$, or $i > |\X||\A|$ and $c_j^\top \rho > \tau_j$ for $j=i-|\X||\A|$.
\end{theorem}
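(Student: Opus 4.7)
The plan is to view the KKT system of the primal--dual pair~\eqref{eq:lp-primal}--\eqref{eq:lp-dual} as an implicit function of $\beta$ and extract a Lipschitz-type bound from its block structure. First I would introduce primal slacks $s := C\rho - \tau \ge 0$ and dual slacks $\mu := \Psi^\top W - C^\top \lambda - r \ge 0$. With $z := [\rho;s]$, $y := [\lambda;W]$, and the augmented constraint matrix $M$ whose two block rows are $[\,C\ \ {-}I\,]$ and $[\,\Psi\ \ 0\,]$, primal feasibility becomes $Mz = [\tau;\beta]$ with $z \ge 0$, dual feasibility becomes a linear equation in $(y,\mu)$, and complementary slackness couples them via $z_i\mu_i = 0$.

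Under strict complementarity and non-degeneracy at the nominal optimum (implicitly required for $R^{-1}$ to exist), each column index of $M$ is either basic ($z_i^* > 0$, $\mu_i^* = 0$) or non-basic ($z_i^* = 0$, $\mu_i^* > 0$). This is precisely the $[R_1 \mid R_2]$ partition in the theorem, and $R_1$ is square and invertible. For a perturbation preserving this active basis, the KKT equations linearize into a block system in $(\Delta z_B, \Delta y, \Delta \mu_N)$: primal feasibility yields $R_1\,\Delta z_B = [0;\Delta\beta]$; the dual equations at basic indices, where $\mu$-slacks remain zero, yield $R_1^\top \Delta y = 0$; and the dual equations at non-basic indices yield $R_2^\top \Delta y + \Delta \mu_N = 0$. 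The coefficient matrix of the stacked system is exactly $R$, and its right-hand side is concentrated in the first block. Applying $R^{-1}$ and taking 2-norms gives
\[
\sqrt{\|\Delta z_B\|^2 + \|\Delta y\|^2 + \|\Delta \mu_N\|^2} \;\le\; \|R^{-1}\|\,\|\Delta \beta\|,
\]
from which the stated bound follows after discarding the non-negative $\|\Delta s\|^2$ and $\|\Delta \mu_N\|^2$ contributions on the left.

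The hard part will be extending this local (small-perturbation) Jacobian bound to arbitrary $\Delta\beta$, since the active basis can change as $\beta$ varies. Here I would invoke the piecewise linearity of the LP solution map: the segment from $\beta$ to $\beta + \Delta\beta$ meets only finitely many basis cells, and within each cell the same block structure recurs with that cell's $R$. Adapting the argument of~\cite{nayakkankuppam1999conditioning}, one accumulates the per-cell contributions and shows that the nominal $\|R^{-1}\|$ serves as a valid upper bound for the total displacement along the path; rigorously justifying this global integration is the technical heart of the proof, and I would defer its details to the cited reference.
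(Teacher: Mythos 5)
The paper does not actually prove this statement: it is imported from \cite{nayakkankuppam1999conditioning} and stated without proof, so there is no in-paper argument to compare yours against line by line. That said, your local derivation is a faithful reconstruction of how $R$ arises in that conditioning analysis. Under strict complementarity and primal nondegeneracy the $[R_1 \mid R_2]$ column split is exactly the basic/nonbasic partition of the augmented matrix $\bigl[\begin{smallmatrix} C & -I\\ \Psi & 0\end{smallmatrix}\bigr]$, $R_1$ is square and invertible, and the linearized KKT system has coefficient matrix $R$ with right-hand side supported only on the $\Delta\beta$ entries, which yields the displayed norm bound. One sanity check worth adding: since only $\beta$ is perturbed, your second block equation $R_1^\top \Delta y = 0$ with $R_1$ invertible forces $\Delta y = 0$, i.e.\ the dual optimum does not move while the basis is fixed; the bound then genuinely constrains only $\Delta\rho^*$ in the local regime, which is consistent with LP sensitivity but worth stating explicitly.

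The genuine gap is the one you flag and then defer: passing from the fixed-basis local bound to ``for any $\Delta\beta$.'' The cited reference establishes a first-order/asymptotic conditioning result, not a global Lipschitz bound with the \emph{nominal} $\|R^{-1}\|$ as the constant. Your proposed path argument accumulates per-cell contributions along the segment from $\beta$ to $\beta+\Delta\beta$, which gives a bound of the form $\sum_k \|R_{(k)}^{-1}\|\,\ell_k$, where $R_{(k)}$ is the matrix of the $k$-th basis cell and $\ell_k$ the length of the segment inside it; this collapses to $\|R^{-1}\|\,\|\Delta\beta\|$ only if the nominal cell's $\|R^{-1}\|$ dominates those of every cell the path crosses, which is false in general. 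One also needs $\beta+\Delta\beta$ to remain in $\mathrm{Feas}_\beta$ and nondegeneracy to hold in every traversed cell so that each $R_{(k)}$ is invertible. As written, the final ``global integration'' step of your proof would therefore fail without additional hypotheses; the same caveat applies to the theorem as stated in the paper, which is best read as a local sensitivity bound valid while the optimal basis is unchanged.
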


Using Theorem~\ref{thm:lp-perturbation}, we can prove the following value bound:
\begin{theorem}
Let $\beta_0$ and $\prim^*(\beta_0)$ be a nominal initial distribution and its optimal value. Then, any initial distribution $\beta_1$ satisfies:
\begin{equation}\label{eq:thm-pert-ineq}
\begin{aligned}
&\prim^*(\beta_1)
\leq \prim^*(\beta_0)\; + \\
&\qquad\|\beta_0 - \beta_1\| \min(
\|[\beta_1;-\tau]\|\|R^{-1}\| + \|W_0^*\|,\|R^{-1}\|\|r\|
).
\end{aligned}
\end{equation}
\end{theorem}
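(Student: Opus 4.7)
The plan is to prove each side of the minimum separately by using Theorem~\ref{thm:lp-perturbation} applied to the perturbation $\Delta\beta = \beta_1-\beta_0$, and then combine. In both cases, the key consequence of the theorem that I would extract first is the coordinate-wise implication
\[
\|\Delta\rho^*\|\;\leq\;\|R^{-1}\|\|\Delta\beta\|,\qquad
\|[\Delta W^*;\Delta\lambda^*]\|\;\leq\;\|R^{-1}\|\|\Delta\beta\|,
\]
which follows from dropping the nonnegative term under the square root in \eqref{eq:pert-solution-change}.

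For the second term in the minimum, I would argue from the primal side. Strong duality gives $\prim^*(\beta_1)=r^\top\rho^*_{\text{new}}$ and $\prim^*(\beta_0)=r^\top\rho^*$, so $\prim^*(\beta_1)-\prim^*(\beta_0)=r^\top\Delta\rho^*$. A single Cauchy--Schwarz application combined with the primal coordinate bound above yields $\prim^*(\beta_1)-\prim^*(\beta_0)\leq\|r\|\|R^{-1}\|\|\Delta\beta\|$, matching the second argument of the $\min$.

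For the first term, I would work from the dual side. Writing $\prim^*(\beta_1)=\beta_1^\top W^*_{\text{new}}-\tau^\top\lambda^*_{\text{new}}$ and $\prim^*(\beta_0)=\beta_0^\top W^*-\tau^\top\lambda^*$, I would decompose the difference as
\[
\prim^*(\beta_1)-\prim^*(\beta_0)=(\beta_1-\beta_0)^\top W^*+[\beta_1;-\tau]^\top[\Delta W^*;\Delta\lambda^*].
\]
Cauchy--Schwarz on each of the two summands, followed by the dual coordinate bound $\|[\Delta W^*;\Delta\lambda^*]\|\leq\|R^{-1}\|\|\Delta\beta\|$, gives the first argument of the $\min$. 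Taking the smaller of the two upper bounds finishes the proof.

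The step I expect to require the most care is the dual-side decomposition: one must add and subtract the ``cross terms'' in $\beta_1^\top W^*_{\text{new}}-\beta_0^\top W^*$ correctly and stack the $\beta_1$ and $-\tau$ pieces into a single inner product with $[\Delta W^*;\Delta\lambda^*]$ so that the joint dual norm delivered by Theorem~\ref{thm:lp-perturbation} is the one that appears. Everything else is direct bookkeeping once the perturbation bounds and strong duality are in place.
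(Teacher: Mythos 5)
Your proposal is correct and follows essentially the same route as the paper: the primal-side Cauchy--Schwarz bound $\|r\|\|R^{-1}\|\|\Delta\beta\|$, the dual-side decomposition $(\beta_1-\beta_0)^\top W_0^* + [\beta_1;-\tau]^\top[\Delta W^*;\Delta\lambda^*]$ bounded term by term, and the minimum of the two. The only (harmless) difference is that you make explicit the step of dropping the nonnegative term under the square root in \eqref{eq:pert-solution-change} to get the individual primal and dual norm bounds, which the paper uses implicitly.
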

\begin{proof}
    We start by noting that Equation~\eqref{eq:pert-solution-change} gives us the upper bound on the change in the solution but not value. To obtain the bound on value change, we start by setting $\beta_1 = \beta_0 + \Delta \beta$ and $\rho_1^* = \rho^*_0 + \Delta \rho$. Then, we have 
\begin{equation*}
\begin{aligned}
|\prim^*(\beta_1) - \prim^*(\beta_0)|
    = & |r^\top \rho_1^* - r^\top \rho_0^*| \\
\overset{(a)}{\leq}& \|r\|\|\Delta \rho\|\\
\overset{(b)}{\leq}& \|r\| \|R^{-1}\| \|\Delta \beta\|\\
\leq& \|r\| \|R^{-1}\| \|\beta_1 - \beta_0\|,
\end{aligned}
\end{equation*}
where inequalities (a) and (b) follow from Cauchy-Schwarz and Equation~\eqref{eq:pert-solution-change}, respectively. Writing the value of the LP using its dual gives us another bound:
\begin{equation*}
\begin{aligned}
    |\prim^*(\beta_1) - &\prim^*(\beta_0)|\\
    =& |(\beta_1^\top W_1^* - \tau^\top\lambda_1^*) - (\beta_0^\top W_0^* - \tau^\top\lambda_0^*)| \\
    =& |\beta_1^\top (W_0^* +\Delta W^*)- \beta_0^\top W_0^* - \tau^\top \Delta\lambda^*| \\
    =& |(\beta_1^\top \Delta W^* - \tau^\top \Delta\lambda^*) + \Delta \beta^\top W_0^*| \\
    \overset{(a)}{\leq}& \|[\beta_1;-\tau]\|\|[\Delta W^*;\Delta \lambda^*]\| + \|\Delta \beta\|\|W_0^*\|\\
    \overset{(b)}{\leq}& \|[\beta_1;-\tau]\|\|R^{-1}\| \|\Delta \beta\| + \|\Delta \beta\|\|W_0^*\| \\
    =& (\|[\beta_1;-\tau]\|\|R^{-1}\| + \|W_0^*\|) \|\beta_1 - \beta_0\|. 
\end{aligned}
\end{equation*}
Here inequalities (a) and (b) again follow from Cauchy-Schwarz and Equation~\eqref{eq:pert-solution-change}, respectively.
To get a tighter upper bound, we compute both of these bounds and take the minimum. This gives us the upper bound in Equation~\eqref{eq:thm-pert-ineq}.
\end{proof}

Since the bounds we proved are on $|\prim^*(\beta_1) - \prim^*(\beta_0)|$, a lower bound can also be trivially obtained:
\begin{equation}\label{eq:pert-lower-bound}
\begin{aligned}
&\prim^*(\beta_1) \geq \prim^*(\beta_0) \; - \\
&\qquad \|\beta_0 - \beta_1\| \min(
\|[\beta_1;-\tau]\|\|R^{-1}\| + \|W_0^*\|, \|R^{-1}\|\|r\|).
\end{aligned}
\end{equation}

\begin{remark}\label{rem:delta-perturb}
    Theorem~\ref{thm:lp-perturbation} can be generalized to obtain following upper bound to $V^*(\beta_1, \tau-\delta)$ for any $\beta_1$ initial distribution and $\delta$ violation:
    \begin{equation}
\begin{aligned}
  &V^*(\beta_1, \tau-\delta)\leq \prim^*(\beta_0,\tau) \; + \\
  &\quad\qquad \phi \min(
 \|[\beta_1;-\tau]\|\|R^{-1}\| + \|W_0^*\|, \|R^{-1}\|\|r\|),
\end{aligned}
\end{equation}
where $\phi = \sqrt{\|\delta\|^2 + \|\Delta \beta\|^2}$ and other quantities are as defined in Theorem~\ref{thm:lp-perturbation}.
\end{remark}
\subsection{Bound from Concavity of LP Values}\label{sec:jensen-bound}

Finally, we propose an alternative bound based on a well-known concavity result for linear program (LP) values over the set of consistent right-hand side vectors. Specifically, focusing on Problem~\eqref{eq:lp-primal}, we define the feasible set of \(\rho\) as a function of \( \beta \):
\[
\mathrm{Feas}_{\rho}(\beta) = \{ \rho \geq 0 \mid C\rho \geq \tau, 
\; \Psi\rho = \beta \}.
\]
We then define the set of consistent \( \beta \) values as
\[
\mathrm{Feas}_\beta = \{ \beta \mid \exists \rho \; \text{s.t.} \; \mathrm{Feas}_{\rho}(\beta) \neq \emptyset \}.
\]
Equivalently, this set can be expressed as
\[
\mathrm{Feas}_\beta = \{ \Psi \rho \mid C\rho \geq \tau, \; \rho \geq 0 \},
\]
which is convex, since it is the image of a convex set under a linear transformation. For any \( \beta \in \mathrm{Feas}_\beta \), define the function
\[
F(\beta) = \max_{{\rho} \in \mathrm{Feas}_{{\rho}}(\beta)} {r}^\top {\rho},
\]
which represents the optimal value of Problem~\eqref{eq:lp-primal} as a function of \( \beta \). A direct corollary of Theorem~5.1 in \cite{bertsimas1997introduction} is:
\begin{theorem}\label{thm:concave}
The function \( F(\beta) \) is concave on the domain \( \mathrm{Feas}_\beta \).
\end{theorem}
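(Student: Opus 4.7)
The plan is to establish concavity by the standard ``convex combination of maximizers is feasible'' argument for parametric linear programs. Concretely, I would fix $\beta_1, \beta_2 \in \mathrm{Feas}_\beta$ and $\theta \in [0,1]$, and let $\beta_\theta = \theta \beta_1 + (1-\theta) \beta_2$. I would then pick optimal primal solutions $\rho_1^*$ and $\rho_2^*$ for Problem~\eqref{eq:lp-primal} under right-hand side vectors $\beta_1$ and $\beta_2$, so that $F(\beta_i) = r^\top \rho_i^*$ for $i = 1, 2$. The natural candidate solution at $\beta_\theta$ is the convex combination $\rho_\theta \defeq \theta \rho_1^* + (1-\theta) \rho_2^*$.

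The next step is to verify that $\rho_\theta \in \mathrm{Feas}_\rho(\beta_\theta)$. Nonnegativity is immediate since $\rho_\theta$ is a convex combination of nonnegative vectors. The equality constraint follows from linearity of $\Psi$: $\Psi \rho_\theta = \theta \Psi \rho_1^* + (1-\theta) \Psi \rho_2^* = \theta \beta_1 + (1-\theta) \beta_2 = \beta_\theta$. The inequality constraint similarly follows because $C \rho_\theta = \theta C \rho_1^* + (1-\theta) C \rho_2^* \geq \theta \tau + (1-\theta) \tau = \tau$, using the feasibility of $\rho_1^*$ and $\rho_2^*$.

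Once feasibility is established, concavity follows from a one-line computation: since $\rho_\theta$ is feasible at $\beta_\theta$ but not necessarily optimal, we obtain
\[
F(\beta_\theta) \;\geq\; r^\top \rho_\theta \;=\; \theta\, r^\top \rho_1^* + (1-\theta)\, r^\top \rho_2^* \;=\; \theta F(\beta_1) + (1-\theta) F(\beta_2),
\]
which is exactly the concavity inequality. There is no real obstacle in this argument; the only subtlety worth flagging is that the claim is asserted on $\mathrm{Feas}_\beta$, so one should note that $\beta_\theta \in \mathrm{Feas}_\beta$ (guaranteed by the convexity of $\mathrm{Feas}_\beta$ already noted in the paper), ensuring $F(\beta_\theta)$ is well-defined. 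Since the statement is presented as a direct corollary of Theorem~5.1 in \cite{bertsimas1997introduction}, the proposal can alternatively just cite that reference and specialize its setup to Problem~\eqref{eq:lp-primal} with right-hand side $[\tau; \beta]$, observing that only the $\beta$-portion is being perturbed.
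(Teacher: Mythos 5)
Your proof is correct and is exactly the standard argument behind Theorem~5.1 of \cite{bertsimas1997introduction}, which the paper simply cites without reproducing; averaging the two optimal occupation measures, checking feasibility at $\beta_\theta$, and invoking suboptimality is precisely how that result is established. No gaps.
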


Utilizing this result, we obtain an upper bound for the value of the CMDP under an arbitrary initial distribution $\beta_1$.
\begin{theorem}
Let $\beta_1$ be an initial distribution over states, yielding CMDP value $\prim^*(\beta_1)$. Moreover, let the cardinality of state space $|\X| = n$ and $\alpha = \max_j \beta_1(j)$ to be largest element of $\beta_1$. Then,
\begin{equation}\label{eq:thm-jensen-ineq}
 \prim^*(\beta_1) \leq \alpha n \prim^*(\upsilon) - \sum_i (\alpha - \beta_1(i)) \prim^*(\delta_i),
\end{equation}
where $\upsilon$ is the uniform distribution over state space and $\delta_i(j)$ is non-zero only at $j = i$. 
\end{theorem}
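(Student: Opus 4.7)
The plan is to apply the concavity of \(F = \prim^*\) established in Theorem~\ref{thm:concave} to a carefully chosen convex combination that expresses the uniform distribution \(\upsilon\) in terms of \(\beta_1\) and the Dirac distributions \(\delta_i\). The key algebraic observation is the identity
\begin{equation*}
n\alpha\,\upsilon \;=\; \sum_{i} \alpha\,\delta_i \;=\; \sum_i \beta_1(i)\,\delta_i \;+\; \sum_i (\alpha - \beta_1(i))\,\delta_i \;=\; \beta_1 \;+\; \sum_i (\alpha - \beta_1(i))\,\delta_i,
\end{equation*}
which uses \(\upsilon = \tfrac{1}{n}\sum_i \delta_i\) and the fact that \(\beta_1 = \sum_i \beta_1(i)\delta_i\).

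From this identity I would read off the convex-combination representation
\begin{equation*}
\upsilon \;=\; \frac{1}{n\alpha}\,\beta_1 \;+\; \sum_i \frac{\alpha - \beta_1(i)}{n\alpha}\,\delta_i,
\end{equation*}
and verify that the coefficients are nonnegative (since \(\alpha = \max_j \beta_1(j) \ge \beta_1(i)\) for all \(i\), and \(\alpha > 0\) because \(\beta_1\) is a probability distribution) and sum to one (since \(1 + \sum_i(\alpha - \beta_1(i)) = 1 + n\alpha - 1 = n\alpha\)).

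Then I would invoke Theorem~\ref{thm:concave}: the function \(F\) is concave on \(\mathrm{Feas}_\beta\), and since \(\beta_1, \delta_1, \dots, \delta_n\) are all probability distributions supported on \(\X\), I would need to assume (or note as a hypothesis of the statement) that each of them lies in \(\mathrm{Feas}_\beta\), so that \(\prim^*(\delta_i)\) and \(\prim^*(\beta_1)\) are well-defined. Applying Jensen's inequality for concave functions yields
\begin{equation*}
\prim^*(\upsilon) \;\ge\; \frac{1}{n\alpha}\,\prim^*(\beta_1) \;+\; \sum_i \frac{\alpha - \beta_1(i)}{n\alpha}\,\prim^*(\delta_i),
\end{equation*}
and multiplying both sides by \(n\alpha\) and rearranging produces exactly Equation~\eqref{eq:thm-jensen-ineq}.

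The main obstacle is really the conceptual step of guessing the right convex decomposition; everything after is routine. A secondary concern is feasibility: if some \(\delta_i\) lies outside \(\mathrm{Feas}_\beta\) (for instance, if state \(s_i\) cannot sustain the constraints \(C\rho \ge \tau\)), then \(\prim^*(\delta_i)\) is undefined and the bound becomes vacuous. I would either add an implicit feasibility assumption or, more generously, interpret \(\prim^*(\delta_i) = -\infty\) for infeasible states, in which case the inequality is trivially true. In applications one would restrict attention to CMDPs where each singleton initial distribution is feasible, which is the natural regime for the bound to be informative.
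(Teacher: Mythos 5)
Your proof is correct and follows essentially the same route as the paper: the identical convex decomposition $\upsilon = \frac{1}{n\alpha}\beta_1 + \sum_i \frac{\alpha - \beta_1(i)}{n\alpha}\delta_i$, the same verification that the coefficients are nonnegative and sum to one, and the same application of concavity followed by multiplication by $n\alpha$. Your added caveat about the feasibility of each $\delta_i$ in $\mathrm{Feas}_\beta$ is a reasonable observation that the paper leaves implicit, but it does not change the argument.
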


\begin{proof}
Using Theorem~\ref{thm:concave} and definition of concavity, for all $\lambda$ such that $\lambda_i \geq 0$ for all $i$ and $\sum_i \lambda_i = 1$:
\begin{equation}\label{eq:cmdp-jensen}
\prim^*(\left(\sum_{i=0}^n \lambda_i x_i \right)) \geq \sum_{i=0}^n \lambda_i \prim^*({x_i}).
\end{equation}
Pick $\lambda_0 = \frac{1}{n\alpha}$ and $\lambda_i = \frac{\alpha - \beta_1(i)}{n\alpha}$ for $i > 0$. Then, choose $x_0 = \beta_1$ and $x_i = \delta_i$ for $i > 0$. Observe that:
\begin{align*}
    \sum_{i=0}^n \lambda_i &= \frac{1}{n\alpha} + \sum_{i=1}^n \frac{\alpha - \beta_1(i)}{n\alpha}\\
    &= \frac{1}{n\alpha} + \frac{1}{n\alpha}(n\alpha - \sum_{i=1}^n \beta_1(i)) \\
    &= \frac{1}{n\alpha}(1 + n\alpha - 1) = 1.
\end{align*}
Moreover, for all $i$, $\alpha \geq \beta_1(i) \geq 0$,  which implies $\lambda_i \geq 0$. Hence, they yield a convex combination of $x_i$s. 
Finally, observe that:
\begin{align*}
   \sum_{i=0}^n \lambda_i x_i &= \frac{1}{n\alpha}\beta_1 + \sum_i \frac{1}{n} \delta_i - \frac{1}{n\alpha} \sum_i \beta_1(i)  \delta_i \\
   &= \upsilon + \frac{1}{n\alpha}\beta_1 - \frac{1}{n\alpha}\beta_1 = \upsilon.
\end{align*}
Thus, substituting in Equation~\eqref{eq:cmdp-jensen}:

\begin{equation}
\prim^*(\upsilon) \geq \frac{1}{n\alpha}  \prim^*(\beta_1)  +  \sum_{i=0}^n \frac{\alpha - \beta_1(i)}{n\alpha} \prim^*(\delta_i).
\end{equation}
Multiplying both sides with $n\alpha$ and reorganizing terms yields Equation~\eqref{eq:thm-jensen-ineq}.
\end{proof}

Note that, choosing $x_i = \delta_i$, $\lambda_i = \beta_1(i)$ for $n \geq i \geq 1$, and $\lambda_0 = 0$ in Equation~\eqref{eq:cmdp-jensen}, we can also obtain the following lower bound on the value under $\beta_1$:

\begin{equation}
     \prim^*(\beta_1) \geq \sum_i \beta_1(i) \prim^*(\delta_i).
\end{equation}

\subsection{Comparison of Bounds}

In this section, we discuss advantages and drawbacks of each bound, as summarized in Table~\ref{tab:bound_methods}.

\begin{table}[h!]
\centering
\begin{tabular}{@{}l@{}cccc@{}}
\toprule
\textbf{Method} & \textbf{Complexity} & \makecell{\textbf{Empirical} \\ \textbf{Tightness}} & \textbf{Linear} & \textbf{Lower Bound} \\
\midrule
\textbf{Duality} & 1 CMDP & \textcolor{black}{\checkmark}  & Yes & No \\

\textbf{Perturbation}     & $1$ CMDP & \textcolor{gray}{--}  & No & Yes \\
\textbf{Concavity}           & $(|\X|+1)$ CMDPs &\textcolor{black}{$\times$} & No & Yes \\
\bottomrule
\end{tabular}
\caption{Comparison of the three bounding methods. 
The second column indicates how many CMDP instances must be solved to compute each bound. 
The third column shows each method’s overall empirical tightness, i.e., how close the bound is to the actual least upper bound (\checkmark: tightest, $\times$: least tight, and --: moderately tight). 
The fourth column specifies if the bound is linear in the initial distribution~$\beta$. 
Finally, the last column indicates if the method also provides a lower bound.}
\vspace{-18pt}
\label{tab:bound_methods}
\end{table}

Computing the duality-based and perturbation bounds relies on solving the CMDP under the nominal initial distribution. Specifically, the duality-based bound depends on the solution to Problem~\eqref{eq:lp-dual}, while the perturbation bound requires identifying the zero and strictly positive components of the optimal solution to Problem~\eqref{eq:lp-primal}.

In contrast, calculating the concavity bound involves solving the CMDP for $|\X| + 1$ distinct initial distributions. This effectively requires determining the optimal policy for every initial state, rendering the computation both intractable and impractical. While it may be possible to use over-approximations of the optimal values to simplify the process, such an approach often leads to trivial bounds, as the concavity bound is already the loosest among the three.

The obtained bounds are not easy to compare analytically. Hence, in Section~\ref{sec:experiments}, we compare them numerically by computing the relative gap between the computed upper bound and true value of $\prim^*(\beta_1)$. We found that the duality-based bounds are the smallest upper bounds and concavity yields the largest bounds. Perturbation bounds are within two order of magnitude from duality-based bounds.

Another important difference between the proposed approaches is whether they provide also a lower bound to $\prim^*({\beta})$. Both perturbation and concavity-based approaches provide lower bounds as explained in their respective sections. However, duality-based approach does not yield a lower bound. When used in Equation~\eqref{eq:epsilon-regret}, a lower bound provides an outer-approximation of the true $\epsilon$-regret set, in contrast to inner-approximation obtained by the upper bounds.

Moreover, we compare the geometry of the obtained bounds. Ideally, we would want the bound to be linear in $\beta_1$, reducing Equation~\eqref{eq:epsilon-regret} to a halfspace. From the three bounds, only the duality-based bound satisfies this. Perturbation bound involves two-norm of $\beta_1$. Similarly, the concavity bound depends on $\|\beta_1\|_\infty$.

Based on this comparison, we conclude that unless a lower bound is needed, duality-based provides the best trade-off. When a lower bound is needed, perturbation bound provides computational efficiency. The concavity approach is the worst one, both computationally intractable for large state spaces and also cannot provide tight bounds.

{
Finally, we note an alternative use of these bounds. As discussed in Remark~\ref{rem:delta-duality} and Remark~\ref{rem:delta-perturb}, these bounds can be generalized to provide bounds on $V^*(\beta_1, \tau - \delta)$. If we are interested in sensitivity to constraint threshold instead of initial distribution, choosing $\beta_1$ to be the nominal initial distribution gives an upper bound on the optimal value under relaxed or tightened constraints.

}
\section{Applications to robust regret problem}\label{sec:applications}

An important question in the study of the initial distribution sensitivity of CMDPs is determining how robust the optimal policy under the nominal initial distribution is to changes in the initial distribution. In this section, we show how the bounds obtained in Section~\ref{sec:bnd-dual} can be used to solve some problems related to this question.

\begin{problem}\label{problem:find-set}
    Given policy $\pi$ and regret $\epsilon$, find the set of initial distributions $B \subseteq \Delta_{|\X|} $ s.t. $\pi$ has $(0, \epsilon)$ regret.
\end{problem}

Note that for a fixed policy $\pi$,
$V_{c_i}^\pi(\beta)$ can be expressed as $\beta^\top V_{c_i}^\pi$ where $V_{c_i}^\pi$ is defined as in Equation~\eqref{eq:value_vector}. Hence, each of the feasibility constraints is linear in $\beta$ and form a halfspace in $\Real^{|\X|}$. However, $V^*(\beta)$ term in Equation~\eqref{eq:epsilon-regret} is a hard-to-compute, nonlinear function of $\beta$ as finding the optimal value requires solving CMDP under initial distribution $\beta$; rendering Problem~\ref{problem:find-set} intractable.

 To deal with this issue, we propose computing inner approximations of the true $(0, \epsilon)$-regret set by replacing $V^*(\beta)$ with an upper bound. If this bound is linear in $\beta$, this approximation becomes a halfspace constraint as well.

\begin{proposition}\label{prp:find-set-polytope}

The following polytope is an inner-approximation to the set defined in Problem~\ref{problem:find-set}:

\begin{equation}\label{eq:prp-find-set}
    \begin{bmatrix}
        (V_r^\pi - W^*(\beta_0))^\top \\
        (V_{c_1}^\pi)^\top \\
        \vdots \\
        (V_{c_K}^\pi)^\top
    \end{bmatrix}
    \beta \geq 
    \begin{bmatrix}
        -\tau^\top \lambda^*(\beta_0) - \epsilon \\
        \tau_1 \\
        \vdots \\
        \tau_K
    \end{bmatrix},
\vspace{.5em}
\end{equation}
where $W^*(\beta_0)$ and $\lambda^*(\beta_0)$ are the optimal solution of Problem~\eqref{eq:lp-dual} for a nominal distribution $\beta_0$. 
\end{proposition}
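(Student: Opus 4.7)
The plan is to prove containment directly: take an arbitrary $\beta$ satisfying the linear system in Equation~\eqref{eq:prp-find-set} and verify that $\pi$ incurs $(0,\epsilon)$-regret at $\beta$, i.e., that both the $0$-Violation condition and the $\epsilon$-Regret condition from Definition~\ref{def:eps-regret} hold. Since $V_r^\pi(\beta) = \beta^\top V_r^\pi$ and $V_{c_i}^\pi(\beta) = \beta^\top V_{c_i}^\pi$ by Equation~\eqref{eq:value_vector}, every quantity involving $\pi$ in these conditions is already linear in $\beta$; the only obstacle is the nonlinear term $\prim^*(\beta)$ in the $\epsilon$-Regret condition, and this is precisely where Theorem~\ref{thm:lagrangian-bound} will be invoked.

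First I would dispose of the $K$ bottom rows of Equation~\eqref{eq:prp-find-set}. For each $i \in [K]$ the row reads $(V_{c_i}^\pi)^\top \beta \geq \tau_i$, which is literally $V_{c_i}^\pi(\beta) \geq \tau_i$, so $\tau_i - V_{c_i}^\pi(\beta) \leq 0$. Hence $\pi$ satisfies the $0$-Violation condition at every such $\beta$, confirming feasibility with $\delta = 0$.

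The main step is to handle the top row. Rearranging $(V_r^\pi - W^*(\beta_0))^\top \beta \geq -\tau^\top \lambda^*(\beta_0) - \epsilon$ gives
\[
\beta^\top W^*(\beta_0) - \tau^\top \lambda^*(\beta_0) - V_r^\pi(\beta) \leq \epsilon.
\]
Now applying Theorem~\ref{thm:lagrangian-bound} at the distribution $\beta$ with nominal $\beta_0$ yields $\prim^*(\beta) \leq \beta^\top W^*(\beta_0) - \tau^\top \lambda^*(\beta_0)$, and chaining the two inequalities produces $\prim^*(\beta) - V_r^\pi(\beta) \leq \epsilon$, which is exactly the $\epsilon$-Regret condition. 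Combined with the feasibility step, this shows that any $\beta$ in the polytope lies in the $(0,\epsilon)$-regret set, establishing the claimed inner-approximation.

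The only subtlety worth flagging is that this is indeed only an inner-approximation, not an equality: we replaced $\prim^*(\beta)$ by the duality-based upper bound from Theorem~\ref{thm:lagrangian-bound}, so initial distributions where the bound is loose but the true regret is still below $\epsilon$ are excluded from the polytope. No further work is required beyond this substitution, so I do not anticipate a hard obstacle; the proof is essentially a one-line application of Theorem~\ref{thm:lagrangian-bound} after rewriting the linear system.
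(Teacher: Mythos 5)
Your proof is correct and follows exactly the route the paper intends (the paper omits an explicit proof but describes precisely this argument in the surrounding text): the bottom $K$ rows are the $0$-Violation conditions verbatim, and the top row is the $\epsilon$-Regret condition after substituting the duality upper bound of Theorem~\ref{thm:lagrangian-bound} for $\prim^*(\beta)$. Your closing remark about why the containment is only an inner-approximation is also the correct reading of the construction.
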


While Problem~\ref{problem:find-set} helps finding the set of initial distributions that satisfy some predetermined robustness conditions, it cannot directly tell how robust a predetermined set of initial distributions is. We formulate this problem as below.

\begin{problem}\label{problem:find-eps}
    Given policy $\pi$ and a set $B \subseteq \Delta_{|\X|}$ of initial distributions feasible under $\pi$, find minimum $\epsilon$ such that $\pi$ has $(0,\epsilon)$ regret for all $\beta \in B$. More formally, we solve
\begin{equation}\label{eq:find-eps-true}
\begin{aligned}
    \min \quad& \epsilon \\
\text{s.t.}\quad& V^*(\beta) - V_r^\pi(\beta) \leq \epsilon \quad &&\forall \beta \in B.
\end{aligned}
\end{equation}
\end{problem}
\vspace{1em}

As formulated in Equation~\eqref{eq:find-eps-true}, Problem~\ref{problem:find-eps} involves an infinite number of constraints and $V^*(\beta)$ is highly nonlinear and makes the problem hard to solve. Using the upper bound in Equation~\eqref{eq:lagrangian-bound}, we can approximate the constraints with a linear constraint. Substituting $V^*(\beta)$ with its upper bound in the constraints yields:
\begin{equation*}
    \beta^\top W^*(\beta_0) - \tau^\top \lambda^*(\beta_0) - V_r^\pi(\beta) \leq \epsilon \quad \forall \beta \in B,
\end{equation*}
where $\beta_0$ is the nominal distribution used in the upper bound. Reorganizing the terms, we obtain:
\begin{equation}\label{eq:find-eps-linear-const}
    \beta^\top( W^*(\beta_0) - V^\pi_r) \leq \tau^\top \lambda^*(\beta_0) + \epsilon \quad \forall \beta \in B.
\end{equation}
Note that Equation~\eqref{eq:find-eps-linear-const} implies the original constraint in Problem~\ref{problem:find-eps}, representing a stricter condition. Therefore, the problems presented below will yield an upper bound on the true solution.

Next, we propose reformulations that eliminate the infinite constraints under two common uncertainty scenarios.

One such scenario arises when $\beta$ is known to lie within the convex hull of a finite set of candidate distributions $\{\beta_1, \beta_2, \dots, \beta_L\}$; that is, \( B = \conv\{\beta_1, \beta_2, \dots, \beta_L\} \).

\begin{proposition}\label{prp:find-eps-vert}
If $B = \conv{\{\beta_1, \beta_2, \dots, \beta_L\}}$, then the following optimization problem provides an upper bound to true solution of Problem~\ref{problem:find-eps}:
\begin{equation}\label{eq:find-eps-vertex}
\begin{aligned}
    \min \quad& \epsilon \\
\text{s.t.}
\quad & \beta_i^\top( W^*(\beta_0) - V^\pi_r) \leq \tau^\top \lambda^*(\beta_0) + \epsilon
\quad i \in [L].
\end{aligned}
\end{equation}
\end{proposition}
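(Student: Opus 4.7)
The plan is to establish the inequality by showing that the feasible set of the vertex-only program~\eqref{eq:find-eps-vertex} is contained in the feasible set of (the linearized relaxation of) Problem~\ref{problem:find-eps}. Since both problems minimize the same scalar $\epsilon$, this containment immediately gives that the optimum of~\eqref{eq:find-eps-vertex} upper-bounds the true minimum of Problem~\ref{problem:find-eps}. Concretely, given any $\epsilon$ feasible for~\eqref{eq:find-eps-vertex}, I would check that the constraint in~\eqref{eq:find-eps-linear-const} holds for every $\beta \in B$, and then invoke the already-established chain (via Theorem~\ref{thm:lagrangian-bound}) that~\eqref{eq:find-eps-linear-const} implies the original robustness constraint $V^*(\beta) - V_r^\pi(\beta) \le \epsilon$.

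First I would exploit the fact that the constraint $\beta^\top(W^*(\beta_0) - V_r^\pi) \le \tau^\top \lambda^*(\beta_0) + \epsilon$ is affine in $\beta$; in particular, the right-hand side does not depend on $\beta$ at all. Hence, for any $\beta = \sum_{i=1}^L \alpha_i \beta_i$ with $\alpha_i \ge 0$ and $\sum_i \alpha_i = 1$, taking the convex combination of the $L$ vertex inequalities yields
\begin{equation*}
\beta^\top(W^*(\beta_0) - V_r^\pi) = \sum_{i=1}^L \alpha_i \beta_i^\top(W^*(\beta_0) - V_r^\pi) \le \sum_{i=1}^L \alpha_i \bigl(\tau^\top \lambda^*(\beta_0) + \epsilon\bigr) = \tau^\top \lambda^*(\beta_0) + \epsilon.
\end{equation*}
Thus the linearized constraint~\eqref{eq:find-eps-linear-const} is satisfied on all of $B = \conv\{\beta_1,\dots,\beta_L\}$.

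Next I would reuse the argument given right before the statement: rearranging the linearized constraint and applying Theorem~\ref{thm:lagrangian-bound} to replace $\beta^\top W^*(\beta_0) - \tau^\top \lambda^*(\beta_0)$ by an upper bound on $V^*(\beta)$ gives $V^*(\beta) - V_r^\pi(\beta) \le \epsilon$ for every $\beta \in B$. Therefore every $\epsilon$ feasible for~\eqref{eq:find-eps-vertex} is also feasible for the true semi-infinite program defining Problem~\ref{problem:find-eps}, which implies that the minimum of~\eqref{eq:find-eps-vertex} is at least the minimum of Problem~\ref{problem:find-eps}, i.e., an upper bound on the true value.

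I do not anticipate a substantive obstacle: the argument is purely a convex-combination extension of finitely many affine inequalities, combined with Theorem~\ref{thm:lagrangian-bound}. The only subtlety to track is the direction of the bound, namely verifying that feasible-set containment runs the right way (from~\eqref{eq:find-eps-vertex} into Problem~\ref{problem:find-eps}, not vice versa) so that the conclusion is an \emph{upper} bound; this is guaranteed because Theorem~\ref{thm:lagrangian-bound} only provides an upper bound on $V^*(\beta)$, and consequently the linearized feasible region is a subset of, rather than a superset of, the true one.
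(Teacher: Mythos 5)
Your proposal is correct and follows essentially the same route as the paper: the constraint in Equation~\eqref{eq:find-eps-linear-const} is affine in $\beta$, so verifying it at the vertices $\beta_i$ extends to all of $\conv\{\beta_1,\dots,\beta_L\}$ by convex combination, and the implication back to the true constraint of Problem~\ref{problem:find-eps} is supplied by Theorem~\ref{thm:lagrangian-bound} as already noted in the text preceding the proposition. You simply write out the convex-combination step and the direction of the feasible-set containment more explicitly than the paper does.
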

\vspace{1em}
\begin{proof}
This result follows from the fact that the constraint is linear in $\beta$. Therefore, if it holds for all extreme points of the convex hull (i.e., the $\beta_i$), it holds for any convex combination of them.
\end{proof}
One might argue that when $L$ is sufficiently small, Problem~\ref{problem:find-eps} could be solved directly by evaluating the constraint at each $\beta_i$, without resorting to an upper bound. However, due to Theorem~\ref{thm:concave}, the original constraint is concave in $\beta$, and thus verifying it only at the vertices is insufficient to guarantee feasibility over the entire set.

Next, we consider the case where representing $B$ via its vertices is not preferable. For example, Wasserstein-1 balls and box constraints often have too many vertices, making enumeration impractical. In such cases, a reformulation that avoids vertex enumeration is preferred. To achieve this, we leverage the results from \cite{ben2015deriving}, which provide equivalent, tractable reformulations for the robust counterparts of many common nonlinear constraints.

\begin{proposition}\label{prp:find-eps-H}
The problem below finds an upper bound to true solution of Problem~\ref{problem:find-eps} when $B = \{\beta \mid H\beta \leq h\} \subseteq \Delta_{\X}$:
\begin{equation}
    \begin{aligned}
        \min_{z,\epsilon} \quad& \epsilon \\
\text{s.t.} \quad 
        &h^\top z - \tau^\top\lambda^*(\beta_0) \leq \epsilon, \\
        &H^\top z = W^*(\beta_0) - V_r^\pi, \\
        &z \geq 0.
    \end{aligned}
\end{equation}
\end{proposition}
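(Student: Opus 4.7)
The plan is to turn the infinitely many linear constraints in \eqref{eq:find-eps-linear-const} into a finite-dimensional LP-duality-based reformulation, following the standard robust-counterpart machinery of \cite{ben2015deriving}. Writing $c = W^*(\beta_0) - V_r^\pi$ and $b = \tau^\top \lambda^*(\beta_0)$, the constraint block $\beta^\top c \leq b + \epsilon$ for all $\beta \in B$ is equivalent to
\[
\max_{\beta}\,\{\beta^\top c \;:\; H\beta \leq h\} \;\leq\; b + \epsilon,
\]
so the objective is to rewrite this worst-case condition without a supremum.

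First, I would observe that the inner maximization is a linear program in $\beta$ (with free sign, since membership in $\Delta_{\X}$ is assumed to be absorbed into the constraint $H\beta \le h$). Its LP dual is
\[
\min_{z\ge 0}\,\{h^\top z \;:\; H^\top z = c\}.
\]
Since $B \subseteq \Delta_{\X}$ is bounded and nonempty by assumption, the primal LP is feasible and bounded, and therefore strong LP duality applies: the two optimal values coincide, and both are attained.

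Next, I would use strong duality to replace the worst-case constraint. The condition $\max_{\beta \in B} \beta^\top c \le b+\epsilon$ holds if and only if the dual optimal value is at most $b+\epsilon$, which in turn holds if and only if there exists some feasible dual variable $z \ge 0$ with $H^\top z = c$ and $h^\top z \le b+\epsilon$. Substituting this existential characterization back into Problem~\ref{problem:find-eps} (after replacing the true value $V^*(\beta)$ by its duality-based upper bound, as already justified in the lead-up to Equation~\eqref{eq:find-eps-linear-const}) yields exactly the optimization stated in the proposition, with $z$ and $\epsilon$ as joint decision variables.

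I do not foresee a serious technical obstacle: the only point that requires care is justifying strong duality, which follows from boundedness of $B$ inside $\Delta_{\X}$ together with nonemptiness (needed for the problem to be meaningful in the first place). The remaining step, namely that the reformulation yields an \emph{upper} bound rather than the exact value, is inherited from the earlier duality-based relaxation and does not need to be re-argued here.
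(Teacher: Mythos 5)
Your proof is correct and follows essentially the same route as the paper: the paper's proof simply cites the robust-counterpart reformulation of a linear constraint under polytopic uncertainty from the referenced work, and your argument spells out the underlying mechanism of that result (worst-case reformulation of the semi-infinite constraint, LP duality for the inner maximization, and strong duality justified by nonemptiness and boundedness of $B \subseteq \Delta_{\X}$). The only difference is that you make explicit the details the paper delegates to the citation, which is a valid and self-contained version of the same argument.
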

\vspace{1em}
\begin{proof}
Since Equation~\eqref{eq:find-eps-linear-const} is a linear constraint with polytopic uncertainty, the result follows directly from \cite{ben2015deriving}.
\end{proof}

\looseness-1Note that while we focus on polytopic $B$ sets here, results from \cite{ben2015deriving} allow generalizing Proposition~\ref{prp:find-eps-H} to many common uncertainty sets.

Moreover, the analysis above focuses on $(0,\epsilon)$-regret, but it naturally extends to the more general notion of $(\delta,\epsilon)$-regret, where the initial distribution $\pi$ may violate the constraints by at most $\delta$. In this setting, our bounds continue to hold with appropriate adjustments, as indicated in Remark~\ref{rem:delta-duality}. For example, Proposition~\ref{prp:find-set-polytope} generalizes to the following polytope, which provides an inner approximation to the set $B \subseteq \Delta_{\X}$ of initial distributions for which $\pi$ incurs $(\delta,\epsilon)$-regret:
\begin{equation}
    \begin{bmatrix}
        (V_r^\pi - W^*(\beta_0))^\top \\
        (V_{c_1}^\pi)^\top \\
        \vdots \\
        (V_{c_K}^\pi)^\top
    \end{bmatrix}
    \beta \geq 
    \begin{bmatrix}
        -(\tau - \delta)^\top \lambda^*(\beta_0) - \epsilon \\
        \tau_1 - \delta \\
        \vdots \\
        \tau_K - \delta
    \end{bmatrix}.
\end{equation}

\section{Experiments}\label{sec:experiments}

In this section, we validate our analysis through illustrative examples.

\subsection{Numerical Comparison of Bounds on Random CMDPs}\label{sec:ex:num-random}

Our first set of experiments evaluates the numerical tightness of the proposed bounds. To this end, we randomly generate 100 CMDPs, each with \( \X = 100 \) states, \( \A = 3 \) actions, and \( K = 2 \) constraints. For each CMDP, the transition probabilities, reward functions, and constraint functions are uniformly sampled, with half of the transitions pruned to avoid fully connected CMDPs.

Moreover, we randomly sample 150 initial distributions \( \beta_1 \) divided into 3 bins according to their total variation (TV) distance from the uniform nominal initial distribution \( \beta_0 \). For each $\beta_1$-CMDP pair, we compute the true value and the upper bounds based on $\beta_0$. The tightness of the bounds is quantified using \textit{relative looseness}, defined as the gap between the true value and the upper bound as percentage of the true value.

\looseness-1 To summarize the results, we compute relative looseness percentage across all CMDPs and \( \beta_1 \) distributions for each distance bin. Then, we report the median of each bin for each method. This provides a comprehensive measure of how the bounds perform across diverse scenarios. The results are summarized in Table~\ref{tab:numeric-tightness-random}. As expected, the larger distance between $\beta_0$ and $\beta_1$ leads to larger errors. However, notice that these are percentage errors; hence, it can be said that duality-based bound is very reliable even for furthest $\beta_1$ points.

\begin{table}[h!]
\centering
\begin{tabular}{|c|c|c|c|}
\hline
\diagbox{\textbf{Bounds}}{\textbf{TV Distance}} & \textbf{0.25-0.5} & \textbf{0.5-0.75} & \textbf{0.75-1} \\ \hline
\textbf{Duality}        & 0.014       & 0.048       & 0.297       \\ \hline
\textbf{Perturbation}   & 1.988       & 3.692       & 9.108       \\ \hline
\textbf{Concavity}      & 9.006       & 18.703      & 61.241      \\ \hline
\end{tabular}
\caption{Median relative looseness of the bounds within TV distance interval for randomly generated CMDPs. For each interval, we sort the relative looseness values of the bounds corresponding to all initial distributions and CMDPs; and report the median where a value of $0$ corresponds to the upper bound being equal to the true value.}
\label{tab:numeric-tightness-random}
\end{table}

\vspace{-1mm}

\subsection{Simple Water Pendulum}

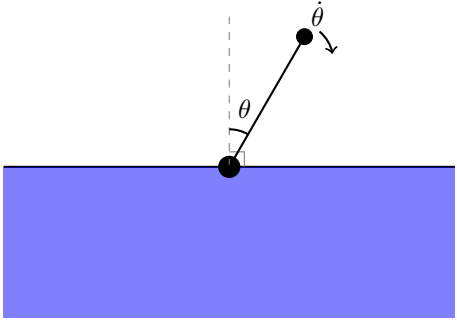
\begin{figure}[t]
    \centering


\begin{tikzpicture}
    \draw[thick] (-3,0) -- (3,0) node[right] {};
    
    \fill[blue, opacity=0.5] (-3,-2) rectangle (3,0);
    
    \filldraw[black] (0,0) circle (4pt);
    
    \draw[gray, dashed] (0,0) -- (0,2);
    \draw[gray] (0.2,0) -- (0.2,0.2) -- (0,0.2);
    
    \draw[thick] (0,0) -- (1,1.732);
    
    \filldraw[black] (1,1.732) circle (3pt);
    
    \draw[thick] ( {0.5*cos(90)},{0.5*sin(90)} ) arc (90:60:0.5);
    \node at ( {0.8*cos(75)},{0.8*sin(75)} ) {$\theta$};

    \draw[->, thick] (1.2,1.8) arc (50:10:0.5);
    \node at ({2.35*cos(60)},{2.35*sin(60)}) {$\dot{\theta}$};
\end{tikzpicture}

\caption{Water pendulum. The objective is to stabilize the pendulum in the upright position, perpendicular to the surface, with both angular position \( \theta \) and angular velocity \( \dot{\theta} \) equal to zero. The blue region represents water.}
\vspace{-8pt}
    \label{fig:water_pendulum}
\end{figure}

While the experiments in the previous section useful for comparing our bounds across diverse set of CMDPs, the random CMDPs are difficult to conceptualize. For this purpose, we use the constrained version of the inverted pendulum task, illustrated in Figure~\ref{fig:water_pendulum}.

The inverted pendulum task involves balancing a pendulum upright by applying a torque input at its pivot point. The state consists of the pendulum angle $\theta \in (-\pi,\pi)$, with upright position assigned $\theta=0$, and $\dot{\theta}$, which takes positive values clockwise. The reward function is negative quadratic in deviation from upright position and angular velocity for all $a \in \A$:
\begin{equation*}
    r([\theta, \dot \theta], a) = -(\theta^2 + \frac{1}{10} \dot \theta ^2),
\end{equation*}
which is between $0$ and $-16.27$.

\looseness-1 In our version, shown in Figure~\ref{fig:water_pendulum}, we assume the pendulum is placed on the surface of a water body such that it is immersed in water for $\theta \geq \pi/2$ or $\theta \leq -\pi/2$. When in water, a buoyant force equal to half of gravity affects the pendulum. We assume two constraint utilities: (1) a -1 utility is incurred every step where pendulum is submerged, (2) an energy budget constraint negatively quadratic of applied torque, $-\frac{1}{4}|a|^2$ where $a$ is the input to system, i.e applied torque. These constraints make the task more challenging and align it with the CMDP framework. For the given mass $m$ and length $l$ of the pendulum, the dynamics of this system is:

\begin{equation}\label{eq:pendulum-dynamics}
\begin{aligned}
\ddot{\theta} &=
\begin{cases}
\frac{3}{2} \left( \frac{g}{l} \sin(\theta) \right) + \frac{3}{m l^2} u, & \text{if } |\theta| \geq \frac{\pi}{2}, \\
3 \left( \frac{g}{l} \sin(\theta) \right) + \frac{3}{m l^2} u, & \text{otherwise},
\end{cases}
\end{aligned}
\end{equation}
where $g$ denotes the acceleration due to gravity.

\looseness-1 We discretize the state space by dividing the range of each state dimension into 50 bins. Similarly, the action range is divided into seven. \rev{Transition probabilities are obtained by uniformly sampling 100 points from each bin, simulating them forward under the deterministic continuous dynamics, assigning the end points to bins, and defining probabilities as the fraction of points landing in each target bin.} This results in a two constraint CMDP with $|\X|=2500$ and $|\A| = 7$. We run the experiments described in Section~\ref{sec:ex:num-random}. The results are summarized in Table~\ref{tab:numeric-tightness-pendulum}. We do not use Concavity bound in this setup as large $|\X|$ makes it computationally too expensive.

\begin{table}[h!]
\centering
\begin{tabular}{|c|c|c|c|}
\hline
\diagbox{\textbf{Bounds}}{\textbf{TV Distance}} & \textbf{0.25-0.5} & \textbf{0.5-0.75} & \textbf{0.75-1} \\ \hline
\textbf{Duality}        & 0.000 & 0.001 & 0.004 \\ \hline
\textbf{Perturbation}   & 0.123 & 0.254 & 0.342 \\ \hline
\end{tabular}
\caption{\looseness-1 Median relative looseness of the bounds within each TV distance interval for Water Pendulum. For each interval, we sort the relative looseness values of the bounds corresponding to all initial distributions and report the median.}

\label{tab:numeric-tightness-pendulum}
\end{table}

\subsection{Measuring Policy Robustness by $(0,\epsilon)$-Regret Sets}
Next, we compare the robustness of optimal policies for three initial distributions: $\beta_T$, $\beta_B$, and $\beta_U$. The distribution $\beta_T$ assigns high probability mass when $\theta = 0$, $\beta_B$ when $\theta = \pm \pi$, and $\beta_U$ is the uniform distribution. We propose the size of the $(0,\epsilon)$-regret sets of the optimal policies corresponding to these distributions as a measure of robustness of such policies to changes in the initial distribution.

Using Proposition~\ref{prp:find-set-polytope}, we compute, for each initial distribution, an inner approximation of the set of $\beta$ distributions for which the corresponding optimal policy $\pi$ incurs at most $(0, 0.01)$ regret. As a proxy for the size of these sets, we sample $100$ random distributions from the probability simplex using a Dirichlet distribution and compute the ratio of samples that fall within each polytope. This ratio, or \emph{hit rate}, serves as a surrogate for the size.

We find the hit rates to be $0.26$, $0.01$, and $0.51$ for $\beta_T$, $\beta_B$, and $\beta_U$, respectively. For comparison, if we evaluate the hit rates for the true $(0, 0.01)$-regret sets instead of the inner approximations given by Proposition~\ref{prp:find-set-polytope}, the values are $0.33$, $0.01$, and $0.51$. This close agreement suggests that our inner approximation is tight. Note that hit rates for the true regret sets are computed by re-solving the CMDP for each sample point and computing true regret, which might in general be prohibitive. For instance, for this example, hit rate computation for the true regret sets takes $\sim 16$ minutes while our method takes only $0.03$ seconds. 

Upon inspection, we observe that, in general, the optimal policy for $\beta_T$ becomes infeasible under most other distributions, as it tends to apply excessive torque and thereby violates the second constraint. This behavior arises because, when the pole starts near the top, only minimal torque is required to maintain balance. Since the constraint is enforced in expectation, the policy optimized for $\beta_T$ can afford to use large amounts of torque in other states without violating the constraint under $\beta_T$ itself.

In contrast, the optimal policies for $\beta_B$ and $\beta_U$ are more broadly feasible across different distributions, though they tend to exhibit slightly higher regret. This is because they are more conservative in their torque usage—sometimes overly so—in order to robustly satisfy the constraint across a wider range of scenarios.

\subsection{Finding Minimal Regret over a Set of Distributions}
Based on our earlier observation that the policy induced by $\beta_T$ is not $(0, 0.01)$-regret in a large part of the simplex, we evaluate its transferability to a broader set of initial distributions. Specifically, we consider the convex hull of three vertex distributions sampled from the initial distributions that did not fall into true $(0,0.01)$-regret set in the previous section.
We then compute the minimal regret of the policy optimal for $\beta_T$ over this set, as described in Problem~\ref{problem:find-eps}.

\looseness-1 By using Proposition~\ref{prp:find-eps-vert}, we find $0.29$ to be an upper bound to true minimal regret. Next, we evaluate the tightness of this upper bound by comparing it with the solution of Equation~\eqref{eq:find-eps-true} by sampling the set $B$. Note that this approach requires exponential number of samples; hence, we keep the number of vertex distributions low. This approach yields a lower bound to the true regret as we are effectively sampling some constraints from infinite number of constraints. This approach gives $0.24$, so the true minimal regret is within the range $[0.24, 0.29]$.

\balance
\section{Conclusion}
\looseness-1 In this work, we studied how the performance of a CMDP changes under different initial state distributions and proposed three bounding methods—based on dual feasibility, LP perturbation theory, and concavity—to estimate this variation without solving the CMDP repeatedly. Among these, the duality-based bound was consistently the tightest, while the LP perturbation approach offered both upper and lower bounds with moderate accuracy. The concavity-based method was looser and more computationally demanding. We further demonstrated how these bounds enable practical analyses of policy robustness to changing initial distributions. Specifically, we introduced the notion of $(\delta,\epsilon)$-regret to assess robustness, and showed how our bounds efficiently approximate (i) the set of initial distributions where a policy remains near-optimal, and (ii) the policy’s suboptimality over a given set.

Looking ahead, we are interested in extending our analysis to account for uncertainty or changes in the transition dynamics. In this setting, a promising direction is to develop robust and adaptive policies that can accommodate such uncertainties, leveraging the bounds and sensitivity analysis introduced in this work.

\noindent{\bf Acknowledgments:} The authors thank N.S. Aybat (Penn State) and M. Gurbuzbalaban (Rutgers) for providing valuable pointers on LP perturbation analysis.

\bibliographystyle{abbrv}
\bibliography{references}

\end{document}